\documentclass[conference]{IEEEtran}
\usepackage{amsmath,amssymb,amsfonts,amsthm}
\usepackage{booktabs}
\usepackage{array}
\newcolumntype{L}[1]{>{\raggedright\arraybackslash}p{#1}}
\usepackage{multirow}
\usepackage{graphicx}
\usepackage[superscript]{cite}
\makeatletter\def\@citess#1{\textsuperscript{[#1]}}\makeatother  
\usepackage{algorithm}
\usepackage{algpseudocode}
\usepackage{tikz}
\usetikzlibrary{arrows.meta,positioning}
\usepackage{arfig}
\usepackage{microtype}
\usepackage{enumitem}
\usepackage{balance}
\usepackage{url}
\usepackage{soul}\sethlcolor{yellow}
\newif\ifmarkrev\markrevfalse  
\newcommand{\rev}[1]{#1}  
\newcommand{\revb}[1]{\ifmarkrev\hl{#1}\else#1\fi}  
\input{glyphtounicode}\pdfgentounicode=1

\newcommand{\method}{\textsc{CASEL}}
\renewcommand{\figurename}{Figure}\renewcommand{\tablename}{Table}\renewcommand{\thetable}{\arabic{table}}
\newcommand{\E}{\mathbb{E}}
\newcommand{\Prb}{\mathbb{P}}
\newtheorem{proposition}{Proposition}
\setlist[itemize]{leftmargin=*,nosep}
\setlist[enumerate]{leftmargin=*,nosep}

\title{CASEL: Consequence-Aware Selective Execution for Reliable Long-Horizon Tool-Using Agents}
\makeatletter\newcommand{\linebreakand}{\end{@IEEEauthorhalign}\hfill\mbox{}\par\vspace{6pt}\mbox{}\hfill\begin{@IEEEauthorhalign}}\makeatother  
\author{\IEEEauthorblockN{1\textsuperscript{st} Yanyan Zhang}
\IEEEauthorblockA{\textit{Carnegie Mellon University}\\
Pittsburgh, PA 15213, USA\\
zhangyanyanusa@gmail.com}
\and
\IEEEauthorblockN{2\textsuperscript{nd} Fei Shen}
\IEEEauthorblockA{\textit{Feinity LLC}\\
Sacramento, CA 95816, USA\\
shenfei2031@hotmail.com}
\and
\IEEEauthorblockN{3\textsuperscript{rd} Ting Yan}
\IEEEauthorblockA{\textit{Computer Science Department}\\
\textit{Dartmouth College}\\
Hanover, NH 03755, USA\\
ting.yan.gr@dartmouth.edu}
\linebreakand
\IEEEauthorblockN{4\textsuperscript{th} Hao Zhang\textsuperscript{*}}
\IEEEauthorblockA{\textit{Independent Researcher}\\
Los Angeles, CA 90095, USA\\
\textsuperscript{*}Corresponding author: alex1995101099@gmail.com}
}

\begin{document}
\maketitle

\begin{abstract}
Tool-using language agents increasingly modify external systems, yet most runtime safeguards allocate scrutiny by generic uncertainty scores or coarse mutation heuristics. We introduce \method{} (Consequence-Aware Selective Execution Layer), a model-agnostic runtime allocation policy that combines calibrated failure probability, state-dependent consequence, reversibility, ambiguity, and evidence freshness to choose, for each proposed tool call, among direct execution, verification, sandboxing, clarification, approval, and abstention by minimizing a per-mode cost. We formalize direct execution as selective prediction and state and test the validity conditions of a finite-sample threshold and a trajectory risk ledger. In a controlled simulation over 20 seeds with 120,000 test actions each, in which every method has the same intervention menu and the same compute cost, \method{} reduces harmful-side-effect trajectories from 14.75 percent for an uncertainty-ranked gate to 7.02 percent and mean normalized harm from 0.184 to 0.123. A gate ranked by the product of failure probability and consequence reaches similar harm (5.82 percent, 0.120) but sends 45 percent more actions to human approval, and removing reversibility-aware routing raises the rate to 15.53 percent. A trajectory-level risk budget, in contrast, increases harm at equal cost (11.82 percent). Intervention effectiveness is assumed, not measured: the results rank allocation rules under the simulator's assumptions and do not establish deployment effectiveness.
\end{abstract}

\begin{IEEEkeywords}
LLM agents, tool use, reliability, selective prediction, uncertainty calibration, runtime verification, risk-aware execution.
\end{IEEEkeywords}

\section{Introduction}
Language-model agents increasingly act on external systems over long trajectories, \rev{and as models grow more capable they become autonomous domain-specific agents, e.g., SP-Mind runs complete spatial-proteomics analyses with specialized tools.\mbox{\cite{yuan2026spmind}}} \rev{They now also generate device configurations for multi-vendor optical transport networks,\mbox{\cite{xiao2026yang}} where an erroneous change can affect live services.} Benchmarks such as WebArena, $\tau$-Bench/$\tau^2$-Bench, and AgentDojo expose failures from stale state, planning errors, unsafe tool use, and adversarial observations \cite{zhou2024webarena,yao2025taubench,barres2025tau2,debenedetti2024agentdojo}. Reflection and self-critique improve reasoning \cite{yao2023react,shinn2023reflexion}, and recent work studies clarification, abstention, mutating-action verification, least privilege, and transactional containment \cite{suri2026sage,liu2026agentabstain,cuadron2025saber,betser2026agentrim,chen2026cordon}. These mechanisms address different failure sources but leave a common allocation question: \emph{how much scrutiny should be spent before a particular action is allowed to change the world?}

Uncertainty alone does not answer it. Two actions may share a $20\%$ probability of error, yet a read-only search is cheap to retry while an irreversible transfer is costly; an uncertainty-only gate treats them alike, and a mutation-only gate still conflates a reversible edit with an irreversible commitment.

\def\exP{0.20}
\def\exLam{0.02}
\def\exMu{0.015}
\def\exParams{\begin{tabular}{@{}lrrrr@{}} & dir. & ver. & sbx. & appr. \\$\gamma_m$ & 1.00 & 0.42 & 0.15 & 0.04 \\$k(m)$ & 0 & 1 & 1.6 & 2.2 \\$q(m)$ & 0 & 0 & 0.04 & 0.65 \\\end{tabular}}
\def\exCA{0.05}
\def\exRA{0.010}
\def\exJA{\begin{tabular}{@{}lcc@{}}{\color{armuted}$J(m)$:} & direct & verify \\ & \textbf{0.010} & 0.024\end{tabular}}
\def\exMA{direct}
\def\exCB{0.40}
\def\exRB{0.080}
\def\exJB{\begin{tabular}{@{}lccc@{}}{\color{armuted}$J(m)$:} & direct & verify & sandbox \\ & 0.080 & 0.054 & \textbf{0.045}\end{tabular}}
\def\exMB{sandbox}
\def\exCC{0.90}
\def\exRC{0.180}
\def\exJC{\begin{tabular}{@{}lccc@{}}{\color{armuted}$J(m)$:} & direct & verify & approve \\ & 0.180 & 0.096 & \textbf{0.061}\end{tabular}}
\def\exMC{approve}

\begin{figure*}[t]
\centering
\begin{tikzpicture}[font=\sffamily\scriptsize,
  bx/.style={ar box=#1, font=\sffamily\scriptsize, inner sep=2.6pt},
  hd/.style={ar head=#1, font=\sffamily\bfseries\scriptsize, minimum height=4.6mm, inner ysep=1.5pt},
  chip/.style={ar box=blue, font=\sffamily\scriptsize, inner sep=2.2pt, text width=22mm, align=left, minimum height=4.6mm},
  pill/.style={ar tag=#1, font=\sffamily\bfseries\scriptsize, minimum width=12.5mm, minimum height=4.3mm},
  dpill/.style={pill=#1, fill=ar#1, draw=ar#1hd, dashed, line width=0.6pt, text=ar#1hd!85!black},
  lab/.style={font=\sffamily\scriptsize, text=armuted, inner sep=1.2pt},
]
  \node[bx=gray, text width=18mm, minimum height=9mm] (ctx) at (0,0) {\textbf{User objective}\\and context $h_t$};
  \node[bx=green, text width=18mm, minimum height=9mm, below=4mm of ctx] (agent) {\textbf{Base agent}\\(any LLM)};
  \pic[/arfig/icon=argrayhd, scale=0.55] at ($(ctx.west)+(-0.36,0)$) {ar person};
  \pic[/arfig/icon=argreenhd, scale=0.55] at ($(agent.west)+(-0.38,0.02)$) {ar robot};
  \draw[ar flow] (ctx) -- (agent);

  \node[hd=blue, anchor=north west, minimum width=47.6mm] (sh) at ($(ctx.north east)+(9.5mm,0)$) {Risk signals of the proposed call $a_t$};
  \node[chip, text width=18.5mm, anchor=north west] (s1) at ($(sh.south west)+(0,-1.4mm)$) {$\hat p_t$\ failure prob.};
  \node[chip, text width=25.5mm, anchor=north east] (s2) at ($(sh.south east)+(0,-1.4mm)$) {$\hat c_t$\ consequence};
  \node[chip, text width=18.5mm, below=1.3mm of s1] (s3) {$\rho_t$\ reversibility};
  \node[chip, text width=25.5mm, below=1.3mm of s2] (s4) {$g_t{,}\,d_t$\ ambiguity, age};
  \begin{scope}[on background layer]
    \node[ar panel=blue, fit=(sh)(s1)(s2)(s3)(s4), inner sep=3pt] (sig) {};
  \end{scope}
  \draw[ar flow] (agent.east) -- node[above, lab, pos=0.45] {$a_t$} (agent.east -| sig.west);

  \node[hd=peach, fill=arours, anchor=north west, minimum width=46mm] (ch) at ($(sh.north east)+(7mm,0)$) {CASEL selective controller};
  \node[bx=peach, draw=arours, anchor=north, text width=43mm, align=left, below=1.4mm of ch] (c1)
       {$J(m){=}\gamma_m\hat p_t\hat c_t{+}\lambda k(m){+}\mu q(m)$\\[0.5pt]
        $m^\ast{=}\arg\min_{m\in\mathcal M_t(\rho_t)}J(m)$\\[0.5pt]
        optional: $H_{t-1}{+}\bar r_t{>}B_H$ $\Rightarrow$ escalate\\[0.5pt]
        high $g_t$ $\Rightarrow$ clarify;\ stale $d_t$ $\Rightarrow$ re-read};
  \begin{scope}[on background layer]
    \node[ar panel=peach, draw=arours, line width=0.9pt, fit=(ch)(c1), inner sep=3pt] (ctl) {};
  \end{scope}
  \draw[ar flow] (sig.east |- ctl.center) -- (ctl.west);

  \node[pill=green, anchor=north west] (m1) at ($(ch.north east)+(7.5mm,-0.6mm)$) {direct};
  \node[pill=blue, right=1.2mm of m1] (m2) {verify};
  \node[pill=yellow, below=1.3mm of m1] (m3) {sandbox};
  \node[pill=lavender, below=1.3mm of m2] (m4) {approve};
  \node[dpill=gray, below=1.3mm of m3] (m5) {clarify};
  \node[dpill=red, below=1.3mm of m4] (m6) {abstain};
  \node[fit=(m1)(m6), inner sep=0pt] (modes) {};
  \draw[ar flow=arours] (ctl.east |- modes.center) -- node[above, lab, text=arours!80!black] {$m_t$} (modes.west);

  \node[bx=gray, text width=26mm, minimum height=6.4mm, anchor=north] (exec) at ($(modes.south |- ctl.south)+(0,-4.2mm)$)
       {\textbf{Tool executor \& environment}};
  \pic[/arfig/icon=argrayhd, scale=0.42] at ($(exec.east)+(0.3,0)$) {ar gear};
  \node[bx=peach, draw=arours, dashed, anchor=east, text width=36mm, minimum height=6.4mm] (led) at ($(exec.west)+(-11mm,0)$)
       {\textbf{Optional risk ledger}\ \ $H_t=\sum_{i\le t,\,i\in D}\bar r_i\le B_H$};
  \draw[ar flow] (modes.south) -- (exec.north);
  \draw[ar flow] (exec.west) -- node[above, lab] {outcome} (led.east);
  \draw[ar dflow=arours] (led.north -| ctl.center) -- node[right, lab, text=arours!80!black] {$H_{t-1}$} (ctl.south);
  \draw[ar dflow] (led.west) -| node[pos=0.25, above, lab] {observations, evidence age} (agent.south);

  \node[ar label, font=\sffamily\bfseries\scriptsize, anchor=south west] (ta) at ($(ctx.north west)+(-0.78,1.2mm)$)
       {(a) CASEL runtime: one routing decision per proposed tool call};
  \begin{scope}[on background layer]
    \node[ar panel=gray, fit={(ta)(ctx)(agent)(sig)(ctl)(modes)(exec)(led)($(ctx.west)+(-0.78,0)$)($(exec.east)+(0.45,0)$)}, inner sep=4pt] (pa) {};
  \end{scope}

  \node[ar label, font=\sffamily\bfseries\scriptsize, anchor=north west] (tb) at ($(pa.south west)+(4pt,-2.4mm)$)
       {(b) Same calibrated $\hat p_t=\exP$, different consequence $\Rightarrow$ three different modes};
  \newcommand{\excard}[6]{
    \node[bx=#4, anchor=north west, text width=40mm, align=left, minimum height=15mm] (#1) at #2
      {\textbf{#3}\\[2.2pt]
       $\hat p=\exP$,\ $\hat c=#5$\ $\Rightarrow$\ $\hat r=#6$\\[0.5pt]
       {\setlength{\tabcolsep}{2.4pt}\csname exJ#1\endcsname}};
    \node[pill=#4, anchor=north east, minimum width=11mm, minimum height=3.7mm, inner ysep=0.8pt] at ($(#1.north east)+(-1.1mm,-0.9mm)$) {\csname exM#1\endcsname};}
  \excard{A}{($(tb.south west)+(0,-1.6mm)$)}{read-only \texttt{search}}{green}{\exCA}{\exRA}
  \excard{B}{($(A.north east)+(3mm,0)$)}{reversible \texttt{edit}}{yellow}{\exCB}{\exRB}
  \excard{C}{($(B.north east)+(3mm,0)$)}{irreversible \texttt{transfer}}{lavender}{\exCC}{\exRC}
  \node[bx=gray, anchor=north west, text width=36mm, align=left, minimum height=15mm, inner sep=2pt] (uo) at ($(C.north east)+(3.5mm,0)$)
       {\textbf{Parameters}\ \ $\lambda{=}\exLam$,\ $\mu{=}\exMu$\\[0.5pt]
        {\setlength{\tabcolsep}{2.2pt}\exParams}};
  \begin{scope}[on background layer]
    \node[ar panel=yellow, fit={(tb)(A)(B)(C)(uo)($(pa.west |- A.south)+(4pt,0)$)($(pa.east |- A.south)+(-4pt,0)$)}, inner sep=4pt] (pb) {};
  \end{scope}
\end{tikzpicture}
\caption{\textbf{\method{} routes every proposed tool call by consequence-weighted risk, not by uncertainty alone.}
(a) Structured risk signals feed a controller that minimizes the per-mode cost $J(m)$ over the modes feasible for the
action's reversibility class; \revb{dashed elements are not part of the main comparison; clarify is tested in \protect\mbox{Sec.~\ref{sec:checks}}, abstain is not exercised.} (b) Worked example with the
nominal parameters (inset): three calls with the same $\hat p_t$ but different $\hat c_t$ go to direct execution,
sandboxing, and approval (lowest $J$ in bold); an uncertainty-only gate cannot tell them apart.}
\label{fig:arch}
\end{figure*}

We propose \method{}, a runtime layer between an arbitrary base agent and its tool executor (Figure~\ref{fig:arch}). At each step the controller estimates failure probability $\hat p_t$, consequence $\hat c_t$, reversibility $\rho_t$, ambiguity $g_t$, and evidence staleness $d_t$, and selects a graded intervention. Scrutiny should scale with consequence-weighted risk rather than generic confidence, the intervention should match the action's reversibility, and direct execution should be calibrated as selective prediction \cite{chow1970reject,geifman2019selectivenet,angelopoulos2024crc}. \method{} is therefore not another uncertainty gate but a runtime \emph{allocation policy} that decides where scrutiny is worth its cost and calls existing safety mechanisms as its intervention branches.

Our contributions are: (i) a consequence-aware execution objective with six intervention modes, \revb{five of which are exercised (abstain and the staleness signal $d_t$ are not)}; (ii) explicit validity conditions for a finite-sample direct-execution threshold, \revb{shown to bind and hold over a sweep of $\epsilon$, and an analysis of a trajectory risk ledger whose premise fails in practice}; and (iii) a reproducible controlled evaluation in which all comparators share the intervention menu and cost, with ablations, paired significance tests, stress tests, and a negative result for trajectory budgeting, plus a deployment path.

\section{Related Work}
\textbf{Agents and benchmarks.} ReAct interleaves reasoning with action and Reflexion stores verbal feedback across attempts \cite{yao2023react,shinn2023reflexion}; WebArena and the $\tau$-family emphasize stateful, long-horizon tool use \cite{zhou2024webarena,yao2025taubench,barres2025tau2}. Focus-Then-Contact learns robot skills with residual RL under timed human intervention \cite{qiao2026focus}, and DRAFT-RL scores multi-agent reasoning drafts during RL \cite{li2026draftrl}. \rev{Domain-specific agents bring domain-specific benchmarks, such as SP-Bench with 102 spatial-proteomics tasks.\mbox{\cite{yuan2026spmind}}} These works strengthen base policies but do not decide when an action needs extra evidence.

\textbf{Clarification, abstention, and tool safety.} SAGE-Agent selects clarification questions by value of information \cite{suri2026sage}; AgentAbstain tests whether agents recognize when acting is unjustified \cite{liu2026agentabstain}. AgentDojo targets untrusted tool outputs,\cite{debenedetti2024agentdojo} which multi-agent evidence checking can screen;\cite{li2026ragfakenews} AgenTRIM enforces adaptive least privilege \cite{betser2026agentrim}, SABER safeguards mutating steps \cite{cuadron2025saber}, and Cordon stages multi-step effects as semantic transactions \cite{chen2026cordon}.

\textbf{Selective prediction and calibration.} Reject-option prediction formalizes an accuracy--coverage trade-off \cite{chow1970reject,geifman2019selectivenet}; temperature scaling calibrates scores \cite{guo2017calibration}, and conformal risk control bounds monotone risks in expectation \cite{angelopoulos2024crc}, weaker than the high-probability bound used here. \rev{RAGScope routes retrieval-augmented answers to acceptance, review, or an expensive verifier with a cheap, cost-aware evidence gate,\mbox{\cite{liu2026ragscope}}} a pass/escalate decision that \method{} generalizes to several interventions.

Table~\ref{tab:compare} summarizes the comparison, including orthogonal layers: REFLEX calls a strong LLM when its fast decision layer is unsure \cite{wu2026reflex}, and CA2I scores whole action sequences before execution \cite{jafari2026ca2i}. \method{}'s product risk signal makes uncertainty-, consequence-, and mutation-only gates (evaluated in Sec.~\ref{sec:eval}) special cases, and its heterogeneous interventions make the specialized mechanisms candidate branches rather than competitors.

\begin{table*}[t]
\centering
\caption{Qualitative comparison with alternative approaches. \emph{Step}: one decision per proposed tool call.}
\label{tab:compare}
\scriptsize
\setlength{\tabcolsep}{3pt}
\renewcommand{\arraystretch}{0.95}
\begin{tabular}{@{}L{2.9cm}L{3.0cm}L{2.0cm}L{2.55cm}L{2.2cm}L{4.4cm}@{}}
\toprule
Approach & Problem focus & Granularity & Risk signal & Intervention space & Relation to \method{}\\
\midrule
SAGE-Agent \cite{suri2026sage} & Underspecified intent & Question & Spec.\ vs.\ model unc. & Clarify & Implements \textsf{clarify}; triggered by $g_t$\\
REFLEX \cite{wu2026reflex} & Compute-efficient decisions & Decision & Model confidence & Call strong LLM & Orthogonal: chooses the decision maker\\
CA2I \cite{jafari2026ca2i} & Pre-execution outcomes & Action sequence & Predicted return, risk & Select a plan & Orthogonal: admits whole plans\\
\midrule
\textbf{\method{}} & Scrutiny before commit & Step & $\hat p_t\hat c_t,\rho_t,g_t,d_t$ & Six modes, cost-aware & ---\\
\bottomrule
\end{tabular}
\end{table*}

\section{Problem Formulation}
At step $t$, history $h_t$ holds the user objective, observations, and prior actions, and a base policy proposes a tool call $a_t=(z_t,\theta_t)$. Let $F_t\in\{0,1\}$ indicate that $a_t$ is invalid relative to the task, policy, tool contract, or environment state. Its failure probability is
\begin{equation}
 p_t=\Prb(F_t=1\mid h_t,a_t,s_t),
\end{equation}
where $s_t$ is the environment state when $a_t$ would be committed, observed only partially through $h_t$. With $c_t\in[0,1]$ the normalized consequence if the action is wrong, the one-step expected harmful loss is $r_t=p_tc_t$.
\textbf{Notation.} Unhatted $p_t,c_t,r_t$ are true, unobserved quantities used in the analysis; hatted $\hat p_t,\hat c_t,\hat r_t=\hat p_t\hat c_t$ are run-time estimates, and Algorithm~\ref{alg:casel} and all routing rules use only these. $\bar r_t$ is the score charged to the optional ledger. \revb{Throughout, including \mbox{Figure~\ref{fig:arch}} and \mbox{Table~\ref{tab:compare}}, $\lambda$ and $\mu$ weight compute and interaction cost, $\rho_t$ is the reversibility class, and $\gamma_m$ the residual multiplier of mode $m$.}

Before commitment, a controller chooses $m_t$ from $\mathcal M=\{\textsf{direct},\textsf{verify},\textsf{sandbox},\textsf{clarify},\textsf{approve},\textsf{abstain}\}$. Mode $m$ has residual failure probability $p_t^{(m)}$, compute cost $k(m)$, and interaction cost $q(m)$, and the ideal myopic decision is
\begin{equation}
 m_t^*=\arg\min_{m\in\mathcal M_t}\left[p_t^{(m)}c_t+\lambda k(m)+\mu q(m)\right], \label{eq:mode}
\end{equation}
where $\mathcal M_t$ removes infeasible modes. For committed actions, linearity of expectation gives, without independence,
\begin{equation}
 \E\!\left[\textstyle\sum_{t=1}^{T}c_tF_t\right]=\textstyle\sum_{t=1}^{T}p_tc_t,
 \label{eq:cumrisk}
\end{equation}
so cumulative risk can be tracked by a ledger; Sec.~\ref{sec:eval} tests whether budgeting it helps.

\section{CASEL Runtime}
\subsection{Structured Risk Signals}
\textbf{Failure probability.} A raw score $u_t$ is calibrated on a held-out split, $\hat p_t=f_{\mathrm{cal}}(u_t)$, by isotonic regression or, given logits, temperature scaling \cite{guo2017calibration}; thresholds and budgets are then read as probabilities. $u_t$ can also draw on model internal states, as MISO does for ranking models \cite{zhang2026miso}.

\textbf{Consequence.} The controller computes
\begin{equation}
 \hat c_t=\mathrm{clip}\big(c_{\rm base}(z_t)\,M_{\rm scope}(\theta_t,h_t),\,0,\,1\big),
 \label{eq:chain}
\end{equation}
where $c_{\rm base}$ is a frozen per-tool severity from auditable metadata (write access, privilege, data sensitivity, value transfer) and $M_{\rm scope}\ge0$ a bounded magnitude factor (records, amount, rollback cost).

\textbf{Reversibility.} The class $\rho_t$ (read, reversible, compensable, or irreversible) changes the feasible set $\mathcal M_t(\rho_t)$ rather than scaling $\hat c_t$. \textsf{Direct}, \textsf{verify}, \textsf{clarify}, and \textsf{abstain} are always feasible; \textsf{sandbox} requires a reversible mutation or a dry-run interface such as the forkable environments of Counterfactual Rollout Replay \cite{li2026crr}, never for irreversible commitments; \textsf{approve} applies to compensable and irreversible actions.

\textbf{Ambiguity and freshness.} $g_t$ captures missing intent that more sampling cannot recover and routes to clarification; $d_t$ captures stale evidence and triggers a re-read. For GUI agents, perception errors (object detection \cite{kang2024lpdetr}, graphic understanding \cite{deng2025covis}) propagate into $u_t$ and $d_t$.

\subsection{Adaptive Intervention}
For feasible mode $m$, let $\gamma_m(\xi_t)\in[0,1]$ be the empirically estimated residual-error ratio, so $\hat p_t^{(m)}=\gamma_m(\xi_t)\hat p_t$, where $\xi_t$ collects the step features that determine how effective $m$ is (tool $z_t$, reversibility $\rho_t$, available checker); the simulator uses one constant $\gamma_m$ per mode. Substituting into Eq.~\eqref{eq:mode} gives $J(m)=\gamma_m\hat r_t+\lambda k(m)+\mu q(m)$. The expected harm reduction of $m$ is $V_m=(1-\gamma_m)\hat r_t$, so, ignoring interaction cost, $m$ beats direct execution when $V_m>\lambda k(m)$.

\begin{algorithm}[t]
\caption{Consequence-aware selective execution}\label{alg:casel}
\begin{algorithmic}[1]\footnotesize
\Require offline: $f_{\rm cal}$, threshold $\hat\tau$ (Eq.~\eqref{eq:select}); at step $t$: $h_t$, $a_t$, ledger $H_{t-1}$
\State $\hat p_t\gets f_{\rm cal}(u_t)$; estimate $\hat c_t,\rho_t,g_t,d_t$; $\hat r_t\gets\hat p_t\hat c_t$
\State \textbf{if} $g_t>\tau_g$ \textbf{return} (\textsf{clarify}, $H_{t-1}$)
\State \textbf{if} $d_t>\tau_d$ and refresh exists \textbf{return} (\textsf{verify}, $H_{t-1}$)
\State $\mathcal M_t\gets\textsc{FeasibleModes}(\rho_t,a_t)$; \textbf{if} $\hat r_t>\hat\tau$: remove \textsf{direct}
\State $m^*\gets\arg\min_{m\in\mathcal M_t}\gamma_m\hat r_t+\lambda k(m)+\mu q(m)$
\State \emph{optional ledger:} \textbf{if} $m^*=\textsf{direct}$ and $H_{t-1}+\bar r_t>B_H$: $m^*\gets\textsc{Escalate}(\rho_t,\mathcal M_t)$
\State \Return $\big(m^*,\;H_{t-1}+\bar r_t\,\mathbf 1[m^*=\textsf{direct}]\big)$
\end{algorithmic}
\end{algorithm}

\section{Selective-Risk Control}
\subsection{Direct-Execution Set and Finite-Sample Threshold}
Let $X$ be an action record, $S(X)$ a scalar score, and $L(X)\in[0,1]$ a harmful-loss label such as $cF$. For threshold $\tau$, the direct-execution set is $A_\tau=\{X:S(X)\le\tau\}$, with selective risk $R(\tau)=\E[L\mid X\in A_\tau]$ and coverage $C(\tau)=\Prb(X\in A_\tau)$. The objective is to maximize coverage subject to $R(\tau)\le\epsilon$. With $n_\tau$ calibration examples in $A_\tau$ and empirical risk $\hat R(\tau)$, Hoeffding's inequality gives $R(\tau)\le\hat R(\tau)+\sqrt{\ln(1/\delta)/(2n_\tau)}$ with probability at least $1-\delta$ for a fixed threshold. For a fixed grid of $K$ thresholds, a union bound yields the selection rule
\begin{equation}
 \hat R(\tau)+\sqrt{\ln(K/\delta)/(2n_\tau)}\le\epsilon. \label{eq:select}
\end{equation}

\begin{proposition}[Selective-risk validity]
Fix the scorer $S$ and a $K$-element grid before the calibration records are drawn, and let the records $(X_i,L_i)$, $L_i\in[0,1]$, be i.i.d.\ from the deployment distribution of proposed actions and disjoint from the data used to fit $f_{\rm cal}$. Then, with probability at least $1-\delta$, every grid threshold with $n_\tau>0$ that satisfies Eq.~\eqref{eq:select} has $R(\tau)\le\epsilon$, the risk of executing all of $A_\tau$ directly.
\end{proposition}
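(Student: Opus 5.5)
The proof is a per-threshold Hoeffding bound combined with a union bound over the $K$ grid points. The one nontrivial point is that $n_\tau$ is random, so the per-threshold bound has to be applied conditionally on which records fall into $A_\tau$.

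\textbf{Fixing the thresholds.} Because $S$ and the grid $\{\tau_1,\dots,\tau_K\}$ are fixed before the calibration records are drawn, each set $A_{\tau_j}$ is a deterministic measurable set. The records are disjoint from the data used to fit $f_{\rm cal}$, so no record influences the score it is evaluated under. For a fixed $\tau_j$, let $I_j=\{i: X_i\in A_{\tau_j}\}$ and $n_{\tau_j}=|I_j|$.

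\textbf{Conditional Hoeffding step.} Condition on the membership indicators $(\mathbf 1[X_i\in A_{\tau_j}])_{i}$. Since the records are i.i.d., the losses $L_i$ for $i\in I_j$ are, conditionally, i.i.d.\ draws from the law of $L$ given $X\in A_{\tau_j}$. They lie in $[0,1]$ and have mean $R(\tau_j)$. On the event $n_{\tau_j}=n>0$, the one-sided Hoeffding inequality gives
\[
\Prb\Big(R(\tau_j)>\hat R(\tau_j)+\sqrt{\ln(K/\delta)/(2n)}\ \Big|\ n_{\tau_j}=n\Big)\le \delta/K .
\]
Averaging over $n\ge1$ shows that the bad event $E_j=\{n_{\tau_j}>0,\ R(\tau_j)>\hat R(\tau_j)+\sqrt{\ln(K/\delta)/(2n_{\tau_j})}\}$ has $\Prb(E_j)\le\delta/K$. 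This is the step I expect to need the most care, because $n_\tau$ is random and the deviation term depends on it. Conditioning on the selection pattern is the standard fix, and it works only because $A_{\tau_j}$ does not depend on the calibration sample. A data-dependent grid or a refit of $f_{\rm cal}$ on these records would break it.

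\textbf{Union bound and conclusion.} A union bound gives $\Prb(\bigcup_j E_j)\le\delta$. On the complement, every grid threshold with $n_\tau>0$ satisfies $R(\tau)\le \hat R(\tau)+\sqrt{\ln(K/\delta)/(2n_\tau)}$. So any such $\tau$ that also satisfies Eq.~\eqref{eq:select} has $R(\tau)\le\epsilon$. The guarantee is uniform over all grid points, so it holds in particular for the data-selected $\hat\tau$ (for example the largest admissible threshold). Because $L$ is the harmful loss of executing the action, $R(\tau)$ is the expected harm of executing all of $A_\tau$ directly.

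I would close with a remark that validity hinges on the i.i.d.\ deployment assumption, since under distribution shift the conditional mean is no longer $R(\tau)$.
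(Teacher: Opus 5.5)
Your proposal is correct and takes the same route as the paper. Both condition on the membership indicators so the $n_\tau$ losses in $A_\tau$ are i.i.d.\ with mean $R(\tau)$, apply one-sided Hoeffding at level $\delta/K$ per grid threshold, and take a union bound over the grid; your averaging over $n_{\tau_j}=n\ge 1$ spells out the conditioning step the paper's one-line proof leaves implicit.
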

\begin{proof}
Given the membership indicators, the $n_\tau$ losses in $A_\tau$ are i.i.d.\ with mean $R(\tau)$, so Hoeffding's bound holds for each $\tau$ at level $\delta/K$; a union bound covers the grid.
\end{proof}

\textbf{Scope.} (i) Actions in a trajectory are dependent (exchangeability is not enough), so calibration uses one record per trajectory. (ii) The bound covers $A_\tau$ as a whole, not a further-selected subset, which can concentrate high-loss actions. To cover the deployed policy, Eq.~\eqref{eq:select} is applied to the set that the complete policy $\pi_\tau$ executes directly, whose indicator is a fixed function of the trajectory prefix.

\subsection{Trajectory Ledger}
An optional ledger tracks $H_t=\sum_{i\le t}D_i\bar r_i$, where $D_i\in\{0,1\}$ indicates direct execution, and blocks direct execution that would exceed a budget $B_H$.

\begin{proposition}[Budgeted expected harm]
Let $\mathcal G_t$ be the information available when step $t$ is routed, with $D_t$ and $\bar r_t$ $\mathcal G_t$-measurable. If $\E[c_tF_t\mid\mathcal G_t]\le\bar r_t$ whenever $D_t=1$ and $\sum_tD_t\bar r_t\le B_H$ almost surely, then $\E[\sum_tD_tc_tF_t]\le B_H$.
\end{proposition}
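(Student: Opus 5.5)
The plan is to use the tower property term by term, then exchange sum and expectation. The only structure needed is that $D_t$ and $\bar r_t$ are $\mathcal G_t$-measurable, so they can be pulled out of the conditional expectation. No independence across steps is used, which matches the spirit of Eq.~\eqref{eq:cumrisk}.

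\textbf{Step 1 (one step).} Fix $t$. Since $D_t\in\{0,1\}$ is $\mathcal G_t$-measurable and $c_tF_t\ge0$, we have
\begin{equation*}
\E[D_tc_tF_t]=\E\big[D_t\,\E[c_tF_t\mid\mathcal G_t]\big].
\end{equation*}
On the event $\{D_t=1\}$ the hypothesis gives $\E[c_tF_t\mid\mathcal G_t]\le\bar r_t$. On $\{D_t=0\}$ the integrand vanishes. Hence $D_t\,\E[c_tF_t\mid\mathcal G_t]\le D_t\bar r_t$ almost surely, and so $\E[D_tc_tF_t]\le\E[D_t\bar r_t]$.

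\textbf{Step 2 (sum).} All terms $D_tc_tF_t$ are nonnegative, so we sum over $t$ and exchange sum and expectation. Linearity suffices for finite $T$; monotone convergence handles a random or unbounded horizon. This gives
\begin{equation*}
\E\big[\textstyle\sum_tD_tc_tF_t\big]\le\E\big[\textstyle\sum_tD_t\bar r_t\big]\le B_H,
\end{equation*}
where the last inequality uses the almost-sure budget constraint, which Algorithm~\ref{alg:casel} enforces by escalating whenever $H_{t-1}+\bar r_t>B_H$.

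\textbf{Main obstacle.} The only delicate point is the interchange in Step 2 when the horizon $T$ is a stopping time rather than a constant.

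\begin{itemize}
\item If $T$ is bounded, the sum is finite and linearity applies directly.
\item Otherwise, write $\sum_t D_tc_tF_t=\sum_{t\ge1}\mathbf 1[t\le T]D_tc_tF_t$. Then absorb $\mathbf 1[t\le T]$ into $D_t$, which requires $\{t\le T\}\in\mathcal G_t$; this holds because routing at step $t$ occurs only if the trajectory is still running. Monotone convergence then finishes the argument.
\end{itemize}

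I would also point out that the hypothesis $\E[c_tF_t\mid\mathcal G_t]\le\bar r_t$ is exactly the step that can fail in practice, since $\bar r_t$ is an estimate. This is consistent with the paper's remark that the ledger's premise fails empirically.
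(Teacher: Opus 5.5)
Your proof is correct and follows the paper's argument. The paper uses the tower property to replace $c_tF_t$ by $\E[c_tF_t\mid\mathcal G_t]$ after pulling out the $\mathcal G_t$-measurable $D_t$, bounds this by $\bar r_t$ on $\{D_t=1\}$, and then applies the almost-sure budget. Your treatment of nonnegativity and of the sum--expectation interchange for a random horizon fills in a step that the paper's one-line proof leaves implicit.
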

\begin{proof}
By the tower property, $\E[\sum_tD_tc_tF_t]=\E[\sum_tD_t\E[c_tF_t\mid\mathcal G_t]]\le\E[\sum_tD_t\bar r_t]\le B_H$.
\end{proof}
This is conditional accounting, not a reliability guarantee: $\bar r_t$ must upper-bound the true conditional harm at every step, which a calibrated point estimate does not, and the residual harm of intervened actions is outside the ledger.

\textbf{Runtime use.} Offline, the operator selects the largest grid $\hat\tau$ whose complete-policy direct set satisfies Eq.~\eqref{eq:select} ($S=\hat r$, $L=cF$); online, a call runs directly only if $\hat r_t\le\hat\tau$ and \textsf{direct} minimizes $J(m)$. The main comparison uses $\hat\tau=\infty$; Sec.~\ref{sec:checks} tests the threshold rule and the ledger.

\section{Controlled Evaluation}\label{sec:eval}
\subsection{Protocol}\label{sec:protocol}
Each seed has an independent 40,000-action calibration split and 6,000 test trajectories of 20 actions (120,000 test actions). Actions are read-only ($58\%$), reversible ($28\%$), or irreversible ($14\%$). True failure probabilities come from class-dependent Beta distributions with a trajectory-level difficulty perturbation, creating within-task dependence; consequences lie mainly in $[0.02,0.20]$, $[0.15,0.70]$, and $[0.55,1]$ for the three classes. The controller sees a noisy raw failure score, isotonic-calibrated on the calibration split, and a noisy consequence estimate.

The simulator instantiates direct, verify, sandbox, and approve with residual multipliers $\gamma_m$ of $1.00$, $0.42$, $0.15$, and $0.04$, compute costs $k(m)$ of $0$, $1.0$, $1.6$, and $2.2$, and interaction costs $q(m)$ of $0$, $0$, $0.04$, and $0.65$. These are assumed constants. Direct and verify are always feasible, sandbox only for reversible mutations, and approve only for irreversible ones with $\hat c_t\ge0.55$. \revb{Clarify is exercised only in the ambiguity stress family (\mbox{Sec.~\ref{sec:checks}}); abstain and the staleness branch ($d_t$, $\tau_d$) are not exercised.} The cost weights keep $\lambda:\mu=0.02:0.015$; a common factor sets the operating point.

\textbf{Comparators.} \emph{Verify-only gates} (the usual scalar gates) verify the actions ranked highest by $\hat p$ or $\hat c$. \emph{Menu gates} rank by $\hat p$, $\hat c$, or $\hat p\hat c$ and give the top-ranked actions their strongest feasible intervention (sandbox if reversible, approve if irreversible, otherwise verify), so they have \method{}'s menu. \method{} minimizes $J(m)$. Ablations: \method{}-NoRev (verify only), \method{}+Budget ($\lambda=0.02$ with the ledger, $B_H=0.08$, $\bar r_t=\hat p_t\hat c_t$), and \method{}-Uncal (raw score). All of these are matched to one compute cost, $0.486$ (for \method{}, cost-weight factor $0.58\pm0.02$), using scores but no outcome labels; Base, Always-Verify, and Mutation-Gated are fixed policies.

\textbf{Metrics.} An action routed to $m$ fails with probability $\gamma_m p_t$, using one uniform draw per action shared by all methods (paired outcomes). \emph{Task success} is the fraction of trajectories with no failed action; the \emph{harmful side-effect rate} (HSR) is the fraction with at least one failed action of true consequence $c_t\ge0.45$; \emph{mean normalized harm} (MNH) is the mean over trajectories of $\sum_{t=1}^{20}c_tF_t$. Direct coverage is the share of directly executed actions, cost the mean $k(m)$ per action, and approvals the share of actions sent to human approval. Results average 20 seeds.

\begin{figure}[t]
\centering
\includegraphics[width=\columnwidth]{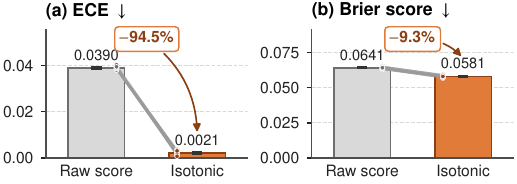}
\caption{\textbf{Isotonic calibration turns the raw failure score into a usable probability.} (a) ECE and (b) Brier score of the raw score and of the calibrated $\hat p$ on test actions. Bars: means over 20 seeds with 95\% CIs; lines: paired per-seed values.}
\label{fig:cal}
\end{figure}

\subsection{Main Results}
Isotonic calibration lowers mean ECE by $94.5\%$ and Brier score by $9.3\%$ (Figure~\ref{fig:cal}).

\begin{table*}[t]
\centering
\caption{Controlled evaluation over 20 seeds (task success and HSR: mean $\pm$ one standard deviation). All methods below Mutation-Gated have compute cost $0.486$. Menu: interventions a method may use (full = verify, sandbox, approve).}
\label{tab:results}
\footnotesize
\setlength{\tabcolsep}{5pt}
\renewcommand{\arraystretch}{0.95}
\begin{tabular}{llcccccc}
\toprule
Method (ranking) & Menu & Task Success $\uparrow$ & HSR $\downarrow$ & MNH $\downarrow$ & Direct Cov. & Cost & Approvals $\downarrow$\\
\midrule
Base & --- & $29.48\!\pm\!0.49$ & $32.76\!\pm\!0.70$ & 0.428 & 100.0\% & 0.000 & 0.0\%\\
Always-Verify & verify & $58.81\!\pm\!0.71$ & $15.47\!\pm\!0.53$ & 0.180 & 0.0\% & 1.000 & 0.0\%\\
Mutation-Gated & verify & $43.78\!\pm\!0.62$ & $15.47\!\pm\!0.53$ & 0.196 & 58.0\% & 0.420 & 0.0\%\\
Uncertainty gate ($\hat p$) & verify & $47.78\!\pm\!0.56$ & $19.25\!\pm\!0.57$ & 0.233 & 51.4\% & 0.486 & 0.0\%\\
Consequence gate ($\hat c$) & verify & $45.10\!\pm\!0.63$ & $15.47\!\pm\!0.53$ & 0.195 & 51.4\% & 0.486 & 0.0\%\\
\midrule
Uncertainty-Menu ($\hat p$) & full & $48.79\!\pm\!0.63$ & $14.75\!\pm\!0.41$ & 0.184 & 66.9\% & 0.486 & 6.9\%\\
Consequence-Menu ($\hat c$) & full & $43.54\!\pm\!0.50$ & $4.38\!\pm\!0.29$ & 0.136 & 74.7\% & 0.486 & 13.8\%\\
Risk-Menu ($\hat p\hat c$) & full & $46.26\!\pm\!0.50$ & $5.82\!\pm\!0.35$ & 0.120 & 74.2\% & 0.486 & 12.6\%\\
\textbf{\method{}} ($J(m)$) & full & $47.01\!\pm\!0.56$ & $7.02\!\pm\!0.30$ & 0.123 & 69.1\% & 0.486 & 8.7\%\\
\midrule
\method{}-NoRev & verify & $46.56\!\pm\!0.54$ & $15.53\!\pm\!0.52$ & 0.194 & 51.4\% & 0.486 & 0.0\%\\
\method{}+Budget & full & $45.82\!\pm\!0.64$ & $11.82\!\pm\!0.54$ & 0.167 & 63.9\% & 0.486 & 4.3\%\\
\method{}-Uncal & full & $47.83\!\pm\!0.50$ & $8.22\!\pm\!0.36$ & 0.129 & 69.3\% & 0.486 & 8.4\%\\
\bottomrule
\end{tabular}
\end{table*}

\textbf{Main result.} With the same menu and the same cost, \method{} lowers HSR from $14.75\%$ for Uncertainty-Menu to $7.02\%$ (paired difference $-7.73$ points, 95\% CI $[-7.87,-7.60]$) and MNH from $0.184$ to $0.123$; task success is $1.79$ points lower, because generic success weights every error equally (Table~\ref{tab:results}).

\textbf{Simple consequence-aware gates are strong.} Risk-Menu has slightly lower HSR ($5.82\%$) and MNH ($0.120$) than \method{}, and Consequence-Menu the lowest HSR ($4.38\%$) with higher MNH ($0.136$) and the lowest task success. Both send more actions to human approval: $12.6\%$ and $13.8\%$ versus $8.7\%$. Matched on total cost $\lambda k+\mu q$ instead, Risk-Menu has HSR $6.22\%$, MNH $0.126$, and task success $45.60\%$ (\method{}: $7.02\%$, $0.123$, $47.01\%$). The per-mode cost $J(m)$ therefore does not reduce harm beyond a product-score gate; it reaches similar harm with about $30\%$ fewer approvals and slightly higher task success, across the cost range (Figure~\ref{fig:pareto}(a,b)).

\begin{figure*}[t]
\centering
\includegraphics[width=\textwidth]{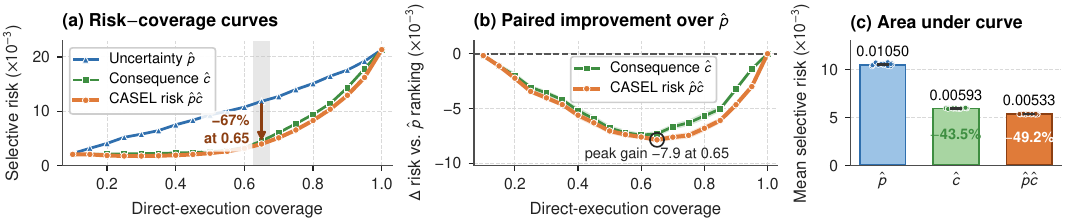}
\caption{\textbf{Ranking actions by $\hat p\hat c$ halves the selective-risk AUC of uncertainty-only ranking.} (a) Selective risk (mean consequence-weighted loss of directly executed actions) versus coverage when actions are admitted in increasing order of $\hat p$, $\hat c$, or $\hat p\hat c$; the arrow marks the largest gap. (b) Per-seed paired difference from the $\hat p$ ranking. (c) Area under each curve (average selective risk over coverage $0.1$--$1.0$); dots are seeds. Bands and whiskers are 95\% CIs over 20 seeds.}
\label{fig:rc}
\end{figure*}

\subsection{Risk--Coverage Quality}\label{sec:rc}
Figure~\ref{fig:rc} isolates score quality from the intervention model by ranking candidate direct actions by $\hat p$, $\hat c$, or $\hat p\hat c$. Averaged over coverage $0.1$--$1.0$, selective-risk AUC is $0.01050$, $0.00593$, and $0.00533$, respectively: the product reduces AUC by $49.2\%$ relative to uncertainty and by $10.1\%$ relative to consequence.

\begin{figure*}[t]
\centering
\includegraphics[width=\textwidth]{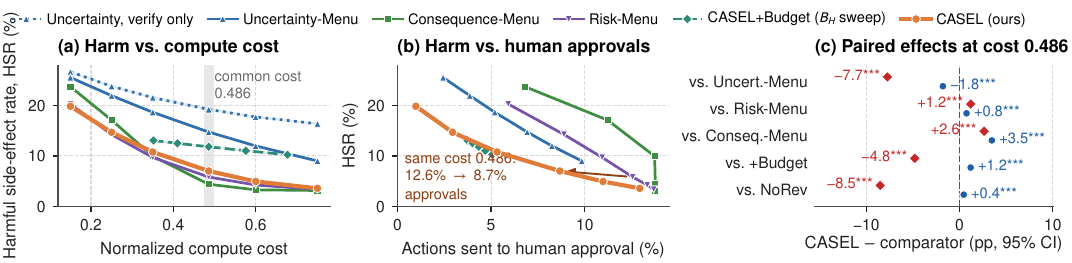}
\caption{\textbf{With equal menus, consequence-aware allocation has far lower harm than uncertainty ranking at every cost; \method{} matches a risk-ranked gate with fewer approvals, and the trajectory budget does not help.} (a) HSR versus compute cost; the band marks the common cost $0.486$. (b) HSR versus the share of actions sent to human approval. (c) Paired differences at cost $0.486$, \method{} minus comparator, in HSR (red diamonds) and task success (blue dots), in points; $^{***}$: $p<0.001$, two-sided paired $t$-test over the 20 seeds, Holm-corrected over the ten tests. Bands and bars are 95\% CIs.}
\label{fig:pareto}
\end{figure*}

\subsection{Ablations}
\textbf{Reversibility.} Restricting \method{} to verification raises HSR from $7.02\%$ to $15.53\%$ and MNH to $0.194$ at the same cost: matching the intervention to commitment semantics accounts for most of the gain. \textbf{Trajectory budget.} \method{}+Budget has HSR $11.82\%$ and MNH $0.167$ at the same compute cost and is dominated along its whole $B_H$ sweep (Figure~\ref{fig:pareto}(a)): a full ledger escalates low-risk actions late in a trajectory, where expected harm is small. In the harm--approval plane it lies on \method{}'s curve (Figure~\ref{fig:pareto}(b)): it trades compute for approvals rather than reducing harm, so the default configuration omits the ledger. \textbf{Calibration.} Using the raw score raises HSR by $1.20$ points and MNH by $0.005$ and raises task success by $0.82$ points. All ten differences in Figure~\ref{fig:pareto}(c) are significant (Holm-corrected $p<10^{-4}$).

\subsection{Checks of the Guarantees and Stress Tests}\label{sec:checks}
\textbf{Threshold rule.} Per seed we keep one random step from each of 40,000 fresh calibration trajectories (i.i.d.\ records) and apply Eq.~\eqref{eq:select} to the deployed policy (20-point grid, $\delta=0.1$). \revb{Sweeping $\epsilon$ over $0.006$--$0.016$: for $\epsilon\ge0.015$ the rule certifies the unrestricted policy (coverage $69.1\%$, true selective risk $0.0048$); at $\epsilon=0.014$ it binds in all 20 seeds ($\hat\tau=0.014$, coverage $62.3\%$, risk $0.0035$, cost $0.553$); below, the Hoeffding slack ($\approx0.010$) blocks certification. With $160{,}000$ records the rule binds at $\epsilon=0.009$ (coverage $63.1\%$) and certifies $8$ of 20 seeds at $0.008$ (coverage $52.6\%$). True selective risk never exceeded $\epsilon$.}

\textbf{Ledger.} \revb{Proposition~2 is therefore an analysis of when budgeting is sound, not a guarantee of the evaluated system:} for \method{}+Budget its premise fails on $54\%$ of directly executed steps ($p_tc_t>\bar r_t$). Realized direct-execution harm exceeds the budget \rev{by $10\%$ at $B_H=0.04$ and $2\%$ at $0.08$}, and residual harm from intervened actions ($0.086$, about half of MNH) is not covered.

\revb{\textbf{Ambiguity.} In $10\%$ of test actions intent is missing: with probability $a_t\sim U(0.2,0.6)$ a correctly executed call is still wrong. Verify and sandbox leave this error, approval halves it, and clarify ($k=0.2$, $q=0.3$) removes $95\%$ before $J(m)$ routes the call. With $g_t=a_t+\mathcal N(0,0.08^2)$ and $\tau_g=0.2$, \method{} clarifies $9.8\%$ of actions (precision $94\%$, recall $92\%$) and lowers HSR from $19.1\%$ to $8.4\%$; at equal compute cost, \method{} without $g_t$ has $18.5\%$ and Risk-Menu $16.4\%$ ($7.8$--$12.4\%$ for $\tau_g\in[0.1,0.4]$).}

\textbf{Misspecification.} Halving $\hat p$, $\hat c$, or every intervention's effectiveness raises \method{}'s HSR to $16.6\%$, $18.8\%$, and $21.0\%$, still below Uncertainty-Menu at the same cost ($23.4\%$, $25.9\%$, $24.3\%$) and within $0.6$ points of Risk-Menu: the ordering is stable, but absolute harm depends on $\gamma_m$ and unbiased estimates.

\section{Deployment Considerations}\label{sec:deploy}
\textbf{Adaptation and drift.} Only the signals in $c_{\rm base}$, $M_{\rm scope}$ and the preferred high-risk modes are domain-specific; \rev{in optical networks, for instance, alarm correlation and root-alarm detection\mbox{\cite{xiao2025joint}} show how far a fault propagates and can inform the scope factor}. $\lambda,\mu$ are read off the deployment's own cost--harm frontier (Figure~\ref{fig:pareto}), and $\gamma_m$ is estimated conservatively on trajectory-disjoint traces. \rev{Such gates must be recalibrated per domain: RAGScope's gate transfers poorly across sources but recovers with 100--200 target labels.\mbox{\cite{liu2026ragscope}}} When logged calibration or residual-risk drift leaves its tolerance region, the controller re-selects $\hat\tau$, re-estimates $\gamma_m$, or lowers $\lambda$ and $\mu$.

\textbf{Integration with existing AI tooling.} \method{} needs only an interception point for tool calls (an MCP proxy, a pre-execution hook, or a policy engine) and no retraining. \rev{In network operations, robotic test-automation platforms for optical transport and commercial O-RUs\mbox{\cite{xiao2025optical,xiao2026oru}} automate validation runs that can back the verify and sandbox branches.} Existing tools implement the branches: an LLM judge or an execution-grounded reward model such as BoostAPR's repair assessors \cite{li2026boostapr} for \textsf{verify}, Cordon for \textsf{sandbox}, an approval service for \textsf{approve}, and SAGE-Agent for \textsf{clarify} \cite{chen2026cordon,suri2026sage}.

\section{Discussion and Limitations}
\method{} is an allocation policy, not a safety guarantee; the simulation ranks allocation rules under assumed intervention effectiveness and costs, not their effect in deployment.

\textbf{Roadmap.} Planned (not yet run) evaluations estimate $\gamma_m$ from real verifier and rollback outcomes on $\tau^2$-Bench, WebArena, AgentDojo, AgentAbstain, and ToolEmu \cite{barres2025tau2,zhou2024webarena,debenedetti2024agentdojo,liu2026agentabstain,ruan2024toolemu}\rev{, and on domain benchmarks such as SP-Bench,\mbox{\cite{yuan2026spmind}} where consequence rubrics must be domain-specific}. \method{} was not compared experimentally with REFLEX \cite{wu2026reflex} or CA2I \cite{jafari2026ca2i}, which act at other levels.

\section{Conclusion}
In a controlled simulation with equal menus and costs, allocating scrutiny by consequence-weighted risk and reversibility roughly halves severe side effects relative to uncertainty ranking; a per-mode cost matches a risk-ranked gate with fewer approvals, and a trajectory budget did not help.

\end{document}